\newtheorem{theorem}{Theorem}
\newtheorem{lemma}{Lemma}
\newtheorem{assumption}{Assumption}
\newcommand{\states}{\mathcal{S}}
\newcommand{\actions}{\mathcal{A}}
\newcommand{\expect}{\mathbb{E}}
\newcommand{\td}{TD($\lambda$)~}
\newcommand{\varg}{v}
\newcommand{\varghat}{V}
\newcommand{\val}{j}
\newcommand{\valhat}{J}
\newcommand{\Exp}[1]{\expect\left[ #1 \right]}
\newcommand{\second}{M}
\newcommand{\lambdaone}{\kappa}
\newcommand{\lambdatwo}{\lambda}
\newcommand{\lambdathree}{\bar{\kappa}}
\newcommand{\secondrho}{\eta}
\newcommand{\trace}{E}
\newcommand{\C}{R}
\renewcommand{\c}{r}
\newcommand{\valtext}{$J$}
\newcommand{\vartext}{$V$}
\newcommand{\metareward}{meta-reward}
\newcommand{\direct}{direct}
\renewcommand\@date{{%
  \vspace{-\baselineskip}%
  \large\centering
  \begin{tabular}{@{}c@{}}
    Craig Sherstan
    \\ \normalsize sherstan@ualberta.ca
  \end{tabular}%
  \quad
  \begin{tabular}{@{}c@{}}
    Brendan Bennett
    \\    \normalsize babennet@ualberta.ca
  \end{tabular}
  \quad
  \begin{tabular}{@{}c@{}}
    Kenny Young
    \\ \normalsize kjyoung@ualberta.ca
  \end{tabular}
  
  \bigskip
  \begin{tabular}{@{}c@{}}
    Dylan R. Ashley
    \\    \normalsize dashley@ualberta.ca
  \end{tabular}%
  \quad
  \begin{tabular}{@{}c@{}}
    Adam White
    \\    \normalsize amw8@ualberta.ca
  \end{tabular}
  \quad
  \begin{tabular}{@{}c@{}}
    Martha White
    \\ \normalsize whitem@ualberta.ca
  \end{tabular}
  
  \bigskip
  \begin{tabular}{@{}c@{}}
  	Richard S. Sutton 
    \\   	\normalsize rsutton@ualberta.ca
  \end{tabular}
  \bigskip
  
University of Alberta, Department of Computing Science

}}
\begin{document}

%

%


\title{Directly Estimating the Variance of the $\lambda$-Return Using Temporal-Difference Methods}

\maketitle



\begin{abstract}
This paper investigates estimating the variance of a temporal-difference learning agent's update target.
Most reinforcement learning methods use an estimate of the value function, which captures how good it is for the agent to be in a particular state and is mathematically
expressed as the expected sum of discounted future rewards (called the return). These values
can be straightforwardly estimated by averaging batches of returns using Monte Carlo methods. However, if we wish to update the agent's value estimates during learning--before terminal outcomes are observed--we must use a different estimation target called the $\lambda$-return, which truncates the return with the agent's own estimate of the value function. Temporal difference learning methods estimate the expected $\lambda$-return for each state, allowing these methods to update online and incrementally, and in most cases achieve better generalization error and faster learning than Monte Carlo methods. Naturally one could attempt to estimate higher-order moments of the $\lambda$-return. This paper is about estimating the variance of the $\lambda$-return. Prior work has shown that given estimates of the variance of the $\lambda$-return, learning systems can be constructed to (1) mitigate risk in action selection, and (2) automatically adapt the parameters of the learning process itself to improve performance. Unfortunately, existing methods for estimating the variance of the $\lambda$-return are complex and not well understood empirically. We contribute a method for estimating the variance of the $\lambda$-return directly using policy evaluation methods from reinforcement learning. Our approach is significantly simpler than prior methods that independently estimate the second moment of the $\lambda$-return. Empirically our new approach behaves at least as well as
existing approaches, but is generally more robust.
\end{abstract}

\section{Introduction}

\begin{figure}

	\includegraphics[width=\linewidth]{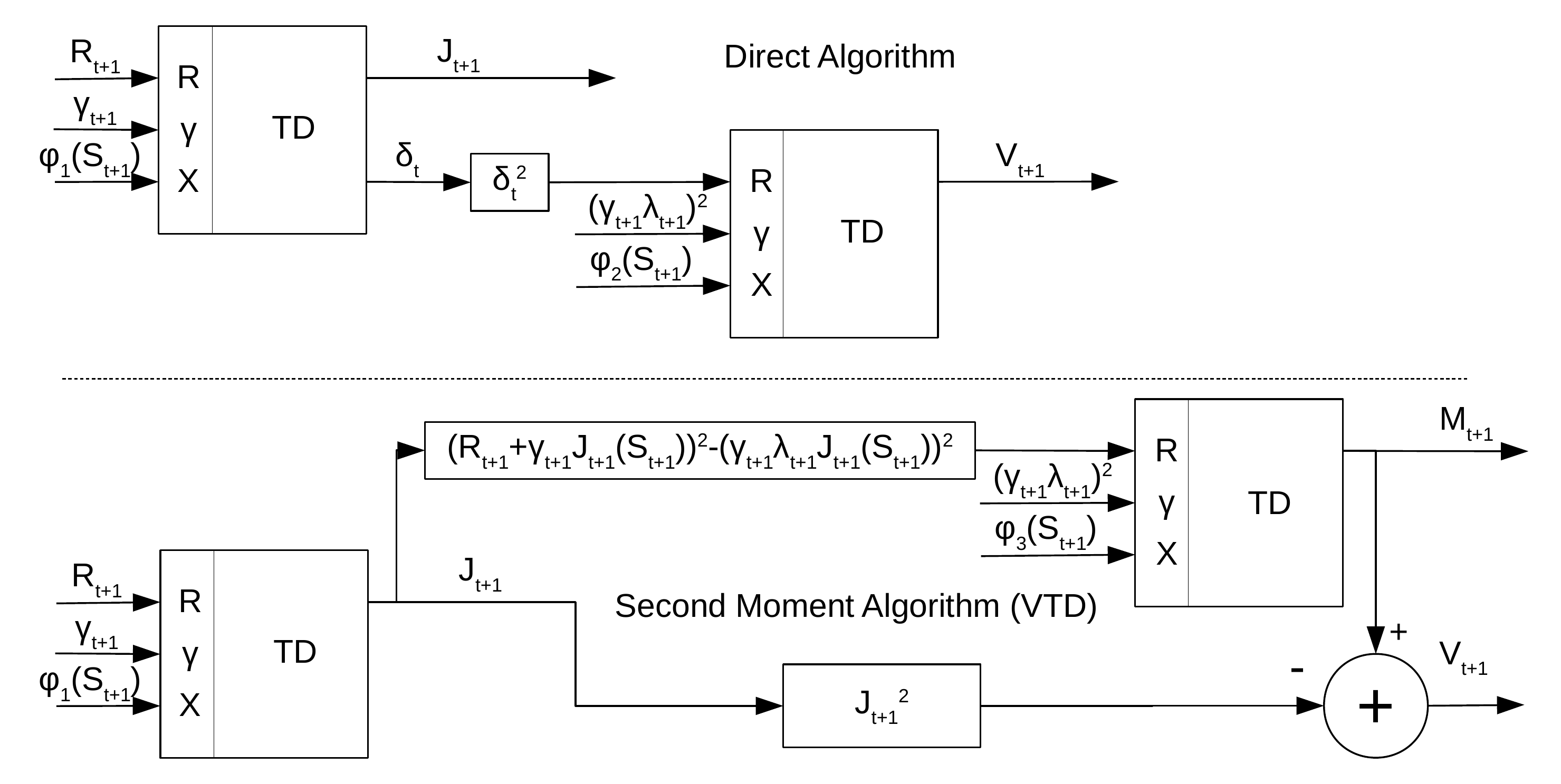}
    	\caption{Each TD node takes as input a reward $\C$, a discounting function $\gamma$, and features $\phi$. 
 For the direct method (\textbf{top}) the squared TD error of the first-stage value estimator is used as the \metareward{} for the second-stage \vartext{} estimator. For VTD (\textbf{bottom}), a more complex computation is used for the \metareward{} and an extra stage of computation is required.       
      }
    \label{fig:networks}
\end{figure}




Conventionally in reinforcement learning, the agent estimates the expected value of the return---the discounted sum of future rewards, as an intermediate step to find an optimal policy. Given a trajectory of experience, the agent can average the returns observed from each state. To estimate the value function online---while the trajectory unfolds---we update the agent's value estimates towards the expected $\lambda$-return. The $\lambda$-return has the same expected value as the return, but can be estimated online using a memory trace. Algorithms that estimate the expected value of the $\lambda$-return are called temporal-difference learning methods.
The first moment, however, is not the only statistic that can be estimated. 
In addition to the expected value, we could estimate the variance of the $\lambda$-return. 

An estimate of the variance of the $\lambda$-return can be used in several ways to improve estimation and decision-making. 
\citet{Sato2002,Ghavamzadeh,Tamar2012,Tamar2013b} use an estimate of the variance of the $\lambda$-return to design algorithms that account for risk in decision making. Specifically they formulate the agent's objective as maximizing reward, while minimizing the variance of the  $\lambda$-return.
\citet{White2016b} estimated the variance of the $\lambda$-return, \vartext{}, to automatically adapt the trace-decay parameter, $\lambda$, used in learning updates. This resulted in faster learning for the agent, but more importantly removed the need to tune $\lambda$ by hand.

The variance \vartext{} can be estimated directly or indirectly. Indirect estimation involves estimating the first moment (the value $\valhat$) and second moment ($M$) of the return and taking their difference as: $\varghat(s)=M(s)-\valhat(s)^2$. \citet{Sobel1982} were the first to formulate a Bellman operators for $M$. Later \citet{Tamar2016,Tamar2013b,Ghavamzadeh}, extended \citet{Sobel1982}'s approach to estimating the variance for $\lambda = 0$ to $\lambda =1$. Finally, \citet{White2016b} introduced an estimation method called VTD, that supports off-policy learning \citep{Sutton2009,Maei2011}, state-dependent discounts and state-dependent trace-decay parameters. 
An alternative approach is to estimate the variance of the $\lambda$-return \vartext{} directly. This has been considered by \citet{Tamar2012}, but they were unable to derive a Bellman operator---instead giving a Bellman-like operator---and considered only cost-to-go problems. 

In this paper, we show that one can use temporal-difference learning, a online method for estimating value functions \citep{Sutton1988}, to estimate \vartext{} directly. Our new method supports off-policy learning, state-dependent discounts, and state-dependent trace-decay parameters. 
We introduce a new Bellman operator for the variance of the $\lambda$-return, and further prove that even for a value function that does not satisfy the Bellman operator for the expected $\lambda$-return, the error in this recursive formulation is proportional to the error in the value function approximation. Interestingly, the Bellman operator for the second moment requires an unbiased estimate of the $\lambda$-return \citep{White2016b}; our Bellman operator for the variance avoids this term, and so has a simpler update. 
Both our direct method and VTD can be seen as a network of two TD estimators running sequentially (Figure~\ref{fig:networks}). 
 


Our goal is to understand the empirical properties of the direct and indirect approaches for estimating variance, 
as neither have yet been thoroughly studied. 
%
In general, we found that direct estimation is just as good as VTD, and in many cases better. Specifically, we observe that the direct approach is better behaved in the early stages of learning before the value function has converged. Further, we observe that the variance of the \vartext{} estimates can be higher for VTD under several circumstances: (1) when there is a mismatch in step-size between the value estimator and the \vartext{} estimator, (2) when traces are used with the value estimator, (3) when estimating \vartext{} of the off-policy return, and (4) when there is error in the value estimate.
Overall, we conclude that the direct approach to estimating \vartext{} is both simpler and better behaved than VTD.

\section{The MDP Setting}


We model the agent's interaction with the environment as a finite Markov decision process (MDP) consisting of a finite set of states $\states$, a finite set of actions, $\actions$, and a transition model $p: \states \times \states \times \actions \rightarrow [0,1]$ defining the probability $p(s'|s, a)$ of transition from state $s$ to $s'$ when taking action $a$. In the policy evaluation setting considered in this paper,
the agent follows a fixed policy $\pi(a|s)\in [0,1]$ that provides the probability of taking action $a$ in state $s$.
At each timestep the agent receives a random reward $R_{t+1}$, dependent only on $S_t, A_t, S_{t+1}$.


The return is the discounted sum of future rewards 
%
\begin{equation}
\begin{aligned}
\label{eq:MCreturn}
G_{t}&=\C_{t+1} + \gamma_{t+1}\C_{t+2} + \gamma_{t+1}\gamma_{t+2}\C_{t+3} + \ldots \\
&=\C_{t+1} + \gamma_{t+1}G_{t+1}.
\end{aligned}
\end{equation}
The discount function $\gamma: \states \rightarrow [0,1]$, with $\gamma_{t}\equiv\gamma(S_t)$, provides a variable level
of discounting depending on the state \citep{Sutton2011}.  
%
The value of a state, $\val(s)$, is defined as the expected return from state $s$ under a particular policy $\pi$
%
%
\begin{align}
\val(s)=&\expect_{\pi}[G_t|S_t=s].
\label{eq:value}
\end{align}
We use $\val$ to indicate the true value function and $\valhat$ the estimate.
The TD-error is the difference between the one-step approximation and the current estimate:
\begin{align}
\delta_t=\C_{t+1}+\gamma_{t+1}\valhat_t(S_{t+1})-\valhat_t(S_t).
\label{eq:TDerr}
\end{align}
The $\lambda$-return 
%
\begin{equation*}
G_t^{\lambda}=\C_{t+1}+\gamma_{t+1}(1-\lambda_{t+1})\valhat_t(S_{t+1}) +\gamma_{t+1}\lambda_{t+1} G^{\lambda}_{t+1}
\end{equation*}
provides a bias-variance trade-off by incorporating $\valhat$, which is a potentially lower-variance but biased estimate of the return.
This trade-off is determined by a state-dependent trace-decay parameter, $\lambda_t\equiv\lambda(S_t) \in [0,1]$.
When $\valhat_t(S_{t+1})$ is equal to the expected return from $S_{t+1}=s$, then
$\expect_{\pi}[(1-\lambda_{t+1})\valhat_t(S_{t+1})
 +\gamma_{t+1}\lambda_{t+1} G^{\lambda}_{t+1}|S_{t+1}=s] = \expect_{\pi}[G^{\lambda}_{t+1}|S_{t+1}=s]$, and so the $\lambda$-return is unbiased. Beneficially, however, the expected value $\valhat_t(S_{t+1})$ is lower-variance than the sample $G^{\lambda}_{t+1}$. 
If $\valhat_t$ is inaccurate, however, some bias is introduced. 
Therefore, when $\lambda=0$, the $\lambda$-return is lower-variance but can be biased. When $\lambda=1$, the $\lambda$-return equals the Monte Carlo return (Equation~\eqref{eq:MCreturn}); in this case, the update target exhibits more variance, but no bias. 
In the tabular setting evaluated in this paper, $\lambda$ does not affect the fixed point solution of the value estimate, only the rate at which learning occurs. It does, however, affect the observed variance of the return, which we estimate. 
The $\lambda$-return is implemented using traces as in the following \td algorithm, shown with accumulating traces:
\begin{equation}
\begin{aligned}
	\trace_t(s)&\leftarrow\begin{cases}
      \gamma_t\lambda_t \trace_{t-1}(s) + 1 & s=S_t \\
      \gamma_t\lambda_t \trace_{t-1}(s) & \forall s \in \states, s \ne S_t
   \end{cases}\\
\valhat_{t+1}(S_t)&\leftarrow \valhat_t(S_t) + \alpha\delta_t \trace_t(S_t)
\end{aligned}
\label{eq:td_replacing_traces}
\end{equation}

\section{Estimating the Variance of the Return}
\label{sec:derivation}



When estimating \vartext, we have both a value estimator
and a variance estimator. The \textit{value estimator} provides an estimate
of the expected $\lambda$-return, known as the policy evaluation problem.
The \textit{variance estimator} provides an estimate of the variance of the $\lambda$-return. We show below how we can similarly use any TD method to learn the variance estimator, such as TD with accumulating traces (Equation~\ref{eq:td_replacing_traces}).

Because we have two separate TD estimators---one each for \valtext{} and \vartext{} --- they can select different trace-decay parameters for learning. In fact, as done by \citet{White2016b}, the value estimator can use a different trace-decay parameter than the $\lambda$-return for which we are estimating \vartext. This is because the $\lambda$-return is defined for any given value function, regardless of how that value function is estimated.  
There are three possible trace-decay parameters: 
1) the $\lambda$ of the $\lambda$-return for which variance is being estimated, 2) that used by the traces of the value estimator ($\lambdaone$), 3) that used by the traces of the variance estimator ($\lambdathree$). 

We summarize the notation here for easy reference. 
Variables without the bar refer to the value estimator and variables with bars refer to the variance estimator. 
\begingroup
\allowdisplaybreaks
\begin{align*}
\val-&\text{true value function of the target policy $\pi$.}\\
\valhat-&\text{estimate of $\val$.}\\
\C-&\text{reward used in the value function estimate.}\\
\bar{\C}-&\text{\metareward{} used in the variance estimate.}\\
\lambdatwo -&\text{bias-variance parameter of the target $\lambda$-return.}\\
\lambdaone -&\text{trace-decay parameter of the value estimator.}\\
\lambdathree -&\text{trace-decay parameter of the secondary estimator.}\\
\gamma -&\text{discounting function used by the value estimator.}\\
\bar{\gamma} -&\text{discounting function used by the variance estimator.}\\
\delta_t -&\text{TD error of the value function at time $t$.}\\
\bar{\delta_t} -&\text{TD error of the variance estimator at time $t$.}\\
\second-&\text{estimate of the second moment.}\\
\varg-&\text{true variance of the return.}\\
\varghat-&\text{estimate of $\varg$.}
\end{align*}
\endgroup
Our direct algorithm, shown here, uses TD(0) to estimate variance. For an expanded implementation with traces in the off-policy setting see Appendix~\ref{appendix:general}.
\\\\
\textbf{Direct Variance Algorithm}
\begin{equation}
\begin{aligned}
\bar{\gamma}_{t+1}& \leftarrow \gamma_{t+1}^2 \lambdatwo_{t+1}^2\\
\bar{\C}_{t+1}& \leftarrow \delta_t^2\\
\bar{\delta}_t& \leftarrow \bar{\C}_{t+1} + \bar{\gamma}_{t+1}\varghat_t(s') - \varghat_t(s)\\
\varghat_{t+1}(s)&\leftarrow \varghat_t(s)+\bar{\alpha} \bar{\delta}_t
\end{aligned}
\label{alg:direct}
\end{equation}
An alternative to this direct method is to instead estimate the second moment. 
The variant shown here is equivalent to on-policy VTD with no traces, $\lambdathree=0$, and the step-size for the second set of weights set to 0. Further, the Tamar TD(0) algorithm (\cite{Tamar2016}) can be recovered from Equation \ref{alg:comparison} by using $\lambdaone=0, \lambdatwo=1, \lambdathree=0$. This algorithm does not impose that the variance be non-negative.
\\\\
\textbf{Second Moment Algorithm (VTD)}
\begin{align}
\bar{\gamma}_{t+1}\leftarrow&\gamma_{t+1}^2 \lambdatwo_{t+1}^2 \nonumber\\
\bar{\C}_{t+1}\leftarrow&(\C_{t+1}+\gamma_{t+1}\valhat_{t+1}(s'))^2- \bar{\gamma}_{t+1}\valhat_{t+1}(s')^2 \nonumber\\
\bar{\delta}_t\leftarrow&\bar{\C}_{t+1} + \bar{\gamma}_{t+1}\second_t(s') - \second_t(s) \label{alg:comparison}
\\
\second_{t+1}(s)\leftarrow&\second_t(s)+\bar{\alpha} \bar{\delta}_t \nonumber\\
\varghat_{t+1}(s) =& \second_{t+1}(s)-\valhat_{t+1}(s)^2 \nonumber
\end{align}

\section{Derivation of the Direct Method}

The derivation of the direct method follows from characterizing the Bellman operator for the variance of the $\lambda$-return.
%
Theorem \ref{recurse_var} gives a Bellman equation for the variance $\varg$. 
It has precisely the form of a TD target with \metareward{} $\bar{\delta}_t=\delta_t^2$ and discounting function $\bar{\gamma}_{t+1}=\gamma_{t+1}^2\lambda_{t+1}^2$. 
Therefore, we can conveniently estimate \vartext{} using TD methods.
Further, we show that even when the value function does not satisfy the Bellman equation,
this results only in a proportional error in the variance estimator. 
We first show the result for the on-policy setting, for simplicity; 
the more general off-policy algorithm is provided in Appendix \ref{appendix:general}

This result provides the first general Bellman operator directly for the variance. 
The Bellman operators for the variance are general, in that they allow for either the episodic or continuing setting,
by using variable $\gamma$.
Interestingly, by directly estimating variance, we avoid a second term in the cumulant,
that is present in approaches that estimate the second moment \citep{Tamar2013b,Tamar2016,White2016b}.
While \citet{Tamar2012} also developed an approach to directly estimate the variance, their method defined a non-linear Bellman operator and is restricted to cost-to-go problems. 
Follow-up work moved to estimating the second-moment instead \citep{Tamar2013b,Tamar2016}, but with simplifying assumptions that only considered expected reward from a state and assuming $\lambda = 1$. 
The work developing VTD generalizes to any $\lambda$, but does not characterize error when using an inaccurate value function. 





To have a well-defined solution to the fixed point, we need the discount to be less than one
for some transition \citep{White2017,Yu2015}. This corresponds to assuming that the policy is proper,
for the cost-to-go setting \citep{Tamar2016}.
\begin{assumption}
The policy reaches a state $s$ where $\gamma(s) < 1$ in a finite number of steps. 
\end{assumption}

\begin{theorem}
For any $s \in \states$, 
\label{recurse_var}
\begin{align}
\val(s)&=\expect\Big[ R_{t+1} + \gamma_{t+1} \valhat(S_{t+1})\ | \ S_t=s\Big] \nonumber\\
\varg(s) 
&=\expect\Big[\delta_t^2+\gamma_{t+1}^2\lambda_{t+1}^2\varg(S_{t+1})\ | \ S_t=s\Big] \label{eq_bellman_var}
\end{align}
Further, for approximate value function $\valhat$, if there is an $\epsilon: \states \rightarrow [0,\infty)$ bounding
value estimates $(\valhat(s) - \val(s))^2 \le \epsilon(s)$
and covariance terms $|\Exp{ \gamma_{t+1} \lambda_{t+1} \delta_{t} (\val(S_{t+1}) - \valhat(S_{t+1})) |S_t = s} | \le \epsilon(s)$, then
%
\begin{align*}
\left|\varghat(s) - \expect\Big[\delta_t^2+\gamma_{t+1}^2\lambda_{t+1}^2\varghat(S_{t+1})\ | \ S_t=s\Big] \right|\le 3 \epsilon(s)
\end{align*}
%
\end{theorem}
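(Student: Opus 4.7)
My plan is to derive the Bellman equation for the variance by rewriting the $\lambda$-return recursively, taking an expected square, and tracking the cross terms. The key algebraic identity I will use is
\[
G_t^\lambda - \valhat(S_t) = \delta_t + \gamma_{t+1}\lambda_{t+1}\bigl(G_{t+1}^\lambda - \valhat(S_{t+1})\bigr),
\]
which follows from adding and subtracting $\valhat(S_{t+1})$ inside the recursive definition of the $\lambda$-return so the bracket $R_{t+1} + \gamma_{t+1}\valhat(S_{t+1}) - \valhat(S_t)$ collapses to $\delta_t$. This identity is the analogue of the standard TD-error decomposition and is exactly what makes the Bellman operator for the variance come out with cumulant $\delta_t^2$ and discount $\gamma_{t+1}^2\lambda_{t+1}^2$.

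For the exact statement I would assume $\valhat = \val$, so that the first Bellman equation in the theorem implies $\val(s) = \expect[G_t^\lambda \mid S_t = s]$. Squaring the identity above, conditioning on $S_t = s$, and taking expectations produces three terms: a $\delta_t^2$ term, a quadratic $\gamma_{t+1}^2\lambda_{t+1}^2(G_{t+1}^\lambda - \val(S_{t+1}))^2$ term, and a cross term. The cross term vanishes by the tower property because $\expect[G_{t+1}^\lambda - \val(S_{t+1}) \mid S_{t+1}] = 0$, and the quadratic term reduces to $\expect[\gamma_{t+1}^2\lambda_{t+1}^2\varg(S_{t+1}) \mid S_t = s]$ directly from the definition of $\varg$. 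For the approximate statement I would repeat the same expansion but additionally split $G_{t+1}^\lambda - \valhat(S_{t+1}) = (G_{t+1}^\lambda - \val(S_{t+1})) + (\val(S_{t+1}) - \valhat(S_{t+1}))$ so that the portion that still integrates to zero is separated from the deterministic error. The surviving cross term is exactly $2\expect[\gamma_{t+1}\lambda_{t+1}\delta_t(\val(S_{t+1}) - \valhat(S_{t+1})) \mid S_t = s]$, which the second hypothesis bounds by $2\epsilon(s)$. Translating between the expected squared deviation from $\valhat(s)$ and the true variance $\varghat(s)$ incurs an additional $(\valhat(s) - \val(s))^2 \le \epsilon(s)$, for a total residual of at most $3\epsilon(s)$.

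The main obstacle I anticipate is the bookkeeping in the approximate case. Expanding $(\delta_t + \gamma_{t+1}\lambda_{t+1}(G_{t+1}^\lambda - \val(S_{t+1})) + \gamma_{t+1}\lambda_{t+1}(\val(S_{t+1}) - \valhat(S_{t+1})))^2$ generates a $(\val - \valhat)^2$ quadratic at the next state and several mixed terms in addition to the clean covariance term, and one has to use the first Bellman equation together with the tower property to kill the ones that involve the zero-mean increment $G_{t+1}^\lambda - \val(S_{t+1})$, while arguing that the remaining pieces are dominated by the stated pointwise and covariance bounds so that the constant comes out to $3$ rather than $4$ or worse. Verifying that only terms covered by the two hypotheses survive is the delicate step of the proof.
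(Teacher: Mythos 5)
Your proposal is correct and follows essentially the same route as the paper: the same TD-error recursion $G_t^\lambda - \val(S_t) = \delta_t + \gamma_{t+1}\lambda_{t+1}(G_{t+1}^\lambda - \val(S_{t+1}))$, the same tower-property argument to kill the cross term (the paper isolates this as a standalone lemma for a general bounded factor $b(S_t,A_t,R_{t+1},S_{t+1})$), and the same error accounting in the approximate case, with $2\epsilon(s)$ from the surviving covariance term and $\epsilon(s)$ from $(\valhat(s)-\val(s))^2$ giving the constant $3$. The bookkeeping you flag as the delicate step is handled in the paper exactly as you anticipate, by splitting $G_{t+1}^\lambda - \valhat(S_{t+1})$ into the zero-mean increment plus the deterministic value error.
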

\begin{proof}
First we expand $G_{t}^{\lambda} - \val(S_{t})$, from which we recover a series with the form of a return.
\begin{align}
G_{t}^{\lambda} - \val(S_{t})
&= R_{t+1} + \gamma_{t+1} ( 1- \lambda_{t+1} ) \val(S_{t+1}) - \val(S_{t}) 
+ \gamma_{t+1} \lambda_{t+1} (G_{t+1}^{\lambda} - \val(S_{t+1}))
\nonumber\\
&= R_{t+1} + \gamma_{t+1} \val(S_{t+1}) - \val(S_{t}) + \gamma_{t+1} \lambda_{t+1} (G_{t+1}^{\lambda} - \val(S_{t+1})) \label{eq:delta-expansion}
\end{align}

The variance of $G_{t}^{\lambda}$ is therefore 
\begin{align}
	\varg(s) &=
    \Exp{\left(G_{t}^{\lambda}-\Exp{G_{t}^{\lambda}|S_t=s}\right)^2|S_t=s} \nonumber\\
    &= \Exp{(G_{t}^{\lambda} - \val(s))^2|S_t=s} \label{eq_recursive_var}\\
    &= \Exp{
    	\Big(\delta_{t} + \gamma_{t+1}\lambda_{t+1}(G_{t+1}^{\lambda} - \val(S_{t+1}))\Big)^2
    |S_t=s}
   \nonumber\\
    &= \Exp{\delta_{t}^{2} | S_t = s}
    	+ \Exp{ \gamma_{t+1}^{2} \lambda_{t+1}^{2} (G_{t+1}^{\lambda} - \val(S_{t+1}))^2 | S_t=s}
        + 2 \Exp{ \gamma_{t+1} \lambda_{t+1} \delta_{t} (G_{t+1}^{\lambda} - \val(S_{t+1})) |S_t = s}\nonumber
\end{align}

Equation \eqref{eq_bellman_var} follows from Lemma \ref{delta_var} in the appendix, showing
$\Exp{ \gamma_{t+1} \lambda_{t+1} \delta_{t} (G_{t+1}^{\lambda} - \val(S_{t+1})) |S_t = s} = 0$.
\\
Now consider the case where we estimate the variance of the $\lambda$-return $\varghat$ 
of an approximate value function $\valhat$.
\begin{align*}
	\varghat(s)
    &= \Exp{(G_{t}^{\lambda} - \val(s) + \valhat(s) - \valhat(s) )^2|S_t=s}\\
    &= \Exp{(G_{t}^{\lambda} - \valhat(s))^2 | S_t=s} + (\valhat(s) - \val(s))^2
    +2\Exp{G_{t}^{\lambda} - \valhat(s)|S_t=s} (\valhat(s) - \val(s))
    .
    \end{align*}
This last term simplifies to
\begin{align*}
\!\!\!\Exp{G_{t}^{\lambda} \!-\! \valhat(s)|S_t\!=\!s} 
\!&= \! \Exp{G_{t}^{\lambda} \!-\! \val(s)|S_t\!=\!s} \!+\! \val(s) - \valhat(s)\\
&= \val(s) - \valhat(s)
\end{align*}
giving $ (\valhat(s) - \val(s))^2 + 2 (\val(s) - \valhat(s))(\valhat(s) - \val(s)) = -(\valhat(s) - \val(s))^2$.
We can use the same recursive form, therefore, as \eqref{eq_recursive_var}, giving
\begin{align*}
	\varghat(s) &= \Exp{\delta_t^2 + \gamma_{t+1}^{2} \lambda_{t+1}^{2}\varghat(S_{t+1}) | S_t = s}
        + 2 \Exp{ \gamma_{t+1} \lambda_{t+1} \delta_{t} (G_{t+1}^{\lambda} - \valhat(S_{t+1})) |S_t = s}
        - (\valhat(s) - \val(s))^2
\end{align*}
%
For the second term, 
\begin{align*}
\bigg|\Exp{ \gamma_{t+1} \lambda_{t+1} \delta_{t} (G_{t+1}^{\lambda} - \valhat(S_{t+1})) |S_t = s}\bigg|
= &\bigg|\Exp{ \gamma_{t+1} \lambda_{t+1} \delta_{t} (G_{t+1}^{\lambda} - \val(S_{t+1})) |S_t = s} \\
&+ \Exp{\gamma_{t+1} \lambda_{t+1} \delta_{t} (\val(S_{t+1}) - \valhat(S_{t+1})) |S_t = s}\bigg|\\
= &\bigg|\Exp{ \gamma_{t+1} \lambda_{t+1} \delta_{t} (\val(S_{t+1}) - \valhat(S_{t+1})) |S_t = s} \bigg|\\
\le &\epsilon(s)
.
\end{align*}
where the second equality follows from Lemma \ref{delta_var} and
the last step from the assumption about bounded covariance terms.
Therefore, 
\begin{align*}
\bigg|\varghat(s) &- \Exp{\delta_t^2 + \gamma_{t+1}^{2} \lambda_{t+1}^{2}\varghat(S_{t+1}) | S_t = s}\bigg|
\le 2 \epsilon(s) + (\valhat(s) - \val(s))^2 \le 3 \epsilon(s)
\end{align*}
\par \vspace{-0.8cm}
\end{proof}


\section{Experiments}

The primary purpose of these experiments is to demonstrate that both the \direct{} method and VTD can approximate the true expected \vartext{} under various conditions in the tabular setting. We consider two domains. The first is a deterministic chain, in Figure~\ref{fig:chain}, which is useful for basic evaluation and gives results which are easy to interpret. The second is a more complex MDP, in Figure~\ref{fig:complex_mdp}, with different discount and trace-decay parameters in each state. For all experiments Algorithm~\ref{eq:td_replacing_traces} is used as the value estimator. Unless otherwise stated, traces are not used ($\lambdaone=\lambdathree=0$). For each experimental setting 30 separate experiments were run and the estimates averaged, with standard deviation shown as shaded regions in the plots. The true values were determined by Monte Carlo estimation and are shown as dashed lines in the figures. Unless otherwise stated, the estimates are all initialized to zero.

We look at the effects of relative step-size between the value estimator and the variance estimators in Section~\ref{sec:exp:alpha}. In Section~\ref{sec:exp:state-dependent} we use the complex domain to show that both algorithms can estimate the variance with state-dependent $\gamma$ and $\lambdatwo$. 
In Section~\ref{sec:exp:err} we evaluate the two algorithms' responses to errors in the value estimate. Section~\ref{sec:exp:traces} looks at the effect of using traces in the estimation method. Finally, in Section~\ref{sec:exp:off-policy} we examine the off-policy setting.



\begin{figure}[!ht]
\centering \includegraphics[width=\linewidth]{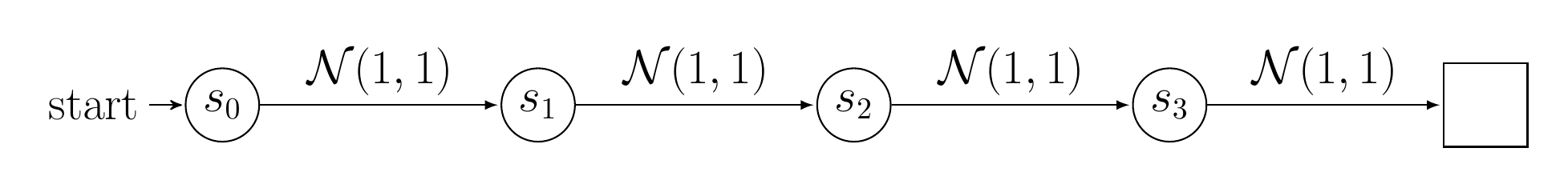}
\caption{Chain MDP with 4 non-terminal states and 1 terminal state. From each non-terminal state there is only a single action with a deterministic transition to the next state to the right. On each transition rewards are drawn from a normal distribution with mean and variance of 1.0. Evaluation was performed for $\lambdatwo=0.9$, which was chosen because it is not at either extreme and because 0.9 is a commonly used value for many RL experimental domains.}
\label{fig:chain}
\end{figure}



\begin{figure}[!ht]
	\begin{center}
	\centerline{\includegraphics[height=3in]{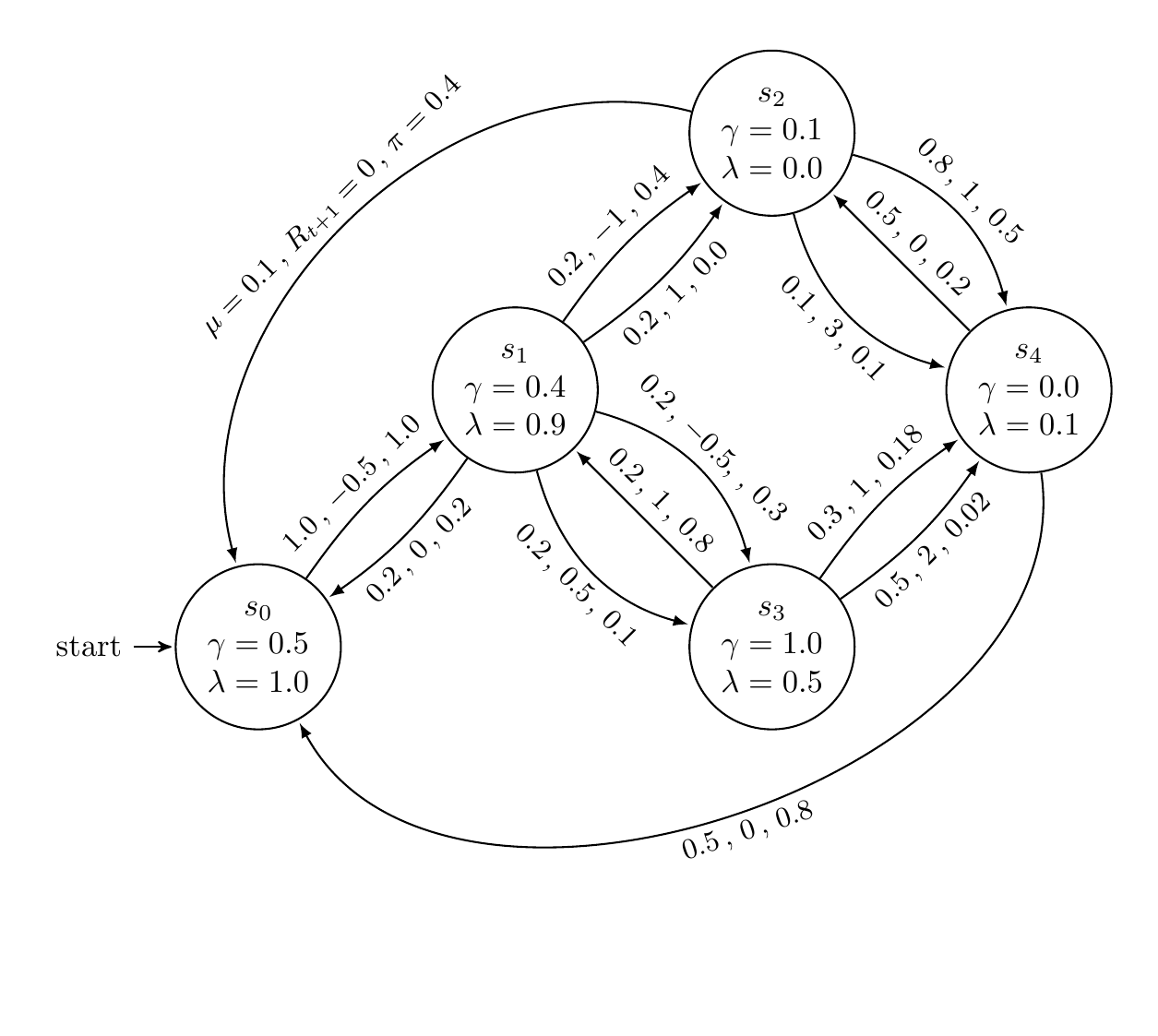}}
    \vspace{-0.2in}
	\caption{Complex MDP, with state-based values for $\gamma$ and $\lambdatwo$ and a stochastic policy. The state-dependent values of $\gamma$ and $\lambdatwo$ are chosen to provide a range of values, with at least one state acting as a terminal state where $\gamma=0$. On-policy action probabilities are indicated by $\mu$ and off-policy ones by $\pi$.}
	\label{fig:complex_mdp}
	\end{center}
\end{figure}



\subsection{The Effect of Step-size}
\label{sec:exp:alpha}

We use the chain MDP to investigate the impact of step-size choice. In Figure~\ref{fig:chain_l1.0_same} all step-sizes are the same $(\alpha=\bar{\alpha}=0.001)$. Both algorithms behave similarly. For Figure~\ref{fig:chain_l1.0_less} the step-size of the value estimate, $(\alpha=0.01)$, is greater than that of the variance estimators, $(\bar{\alpha}=0.001)$. The \direct{} algorithm smoothly approaches the correct value, while VTD first dips well below zero. This is to be expected as the estimates are initialized to zero and the variance is calculated as $\varghat(s)=\second(s)-\valhat(s)^2$. If the second moment lags behind the value estimate then the variance will be negative. In Figure~\ref{fig:chain_l1.0_more} the step-size for the variance estimators is larger than for the value estimator $(0.001=\alpha<\bar{\alpha}=0.01)$. While both methods overshoot the target, VTD has greater overshoot. For both cases of unequal step-size we see higher variance in the variance estimates for VTD.
%
%
\begin{figure}[p]
	\centering
    \subfigure[]{
    	\includegraphics[width=\linewidth]{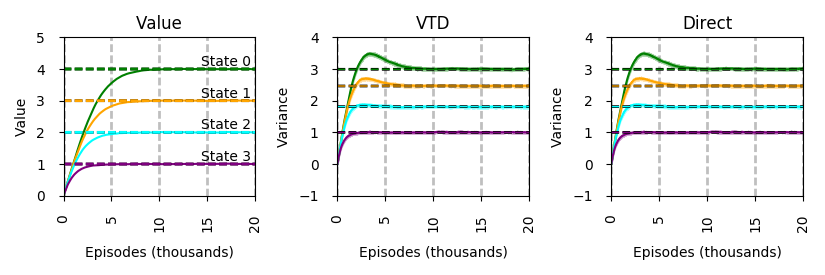}
		\label{fig:chain_l1.0_same}
     }
     \subfigure[]{
    	\includegraphics[width=\linewidth]{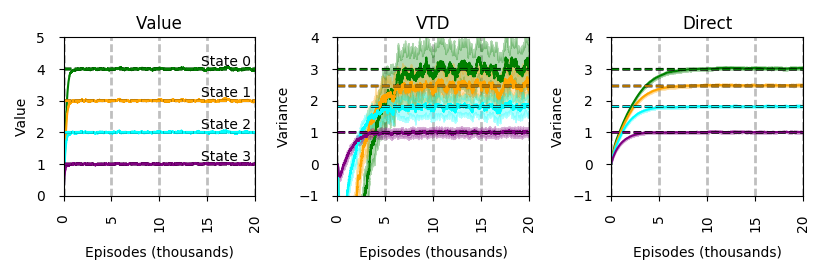}
        \label{fig:chain_l1.0_less}
     }
     \subfigure[]{
		\includegraphics[width=\linewidth]{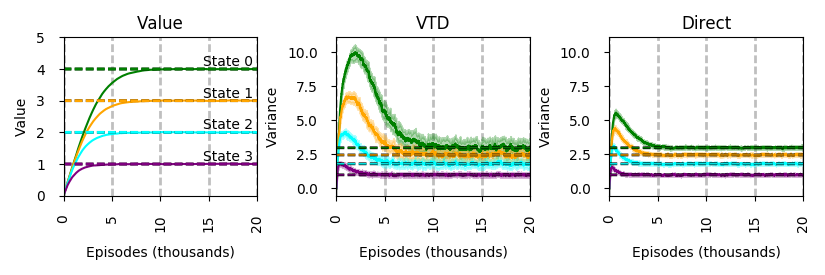}
		\label{fig:chain_l1.0_more}
     }
    \caption{\textbf{Chain MDP} ($\lambdatwo=0.9$). Varying the ratio of step-size between value and variance estimators. \textbf{a)} Step-sizes equal. $\alpha=\bar{\alpha}=0.001$. \textbf{b)} Variance step-size smaller. $\alpha=0.01, \bar{\alpha}=0.001$. \textbf{c)} Variance step-size larger. $\alpha=0.001, \bar{\alpha}=0.01$. We see greater variance in the estimates and greater over/undershoot for VTD when step-sizes are not equal.}
\end{figure}

Figure~\ref{fig:val_init_true_alpha_0} explores this further. Here the value estimator is initialized to the true values and updates are turned off ($\alpha=0$). The variance estimators are initialized to zero and learn with $\bar{\alpha}=0.001$, chosen simply to match the step-sizes used in the previous experiments. 
Despite being given the true values the VTD algorithm produces higher variance in its estimates, suggesting that VTD is dependent on the value estimator tracking. 

This sensitivity to step-size is shown in Figure~\ref{fig:alphas}. All estimates are initialized to their true values. For each ratio we computed the average variance of the 30 runs of 2000 episodes. We can see that the \direct{} method is largely insensitive to step-size ratio, but that VTD has higher mean squared error (MSE) except when the step-sizes are equal. This result holds for the other experimental settings of this paper, including the complex MDP, but further results are omitted for brevity.
\begin{figure}[!ht]
	\begin{center}
		\centerline{\includegraphics[width=\linewidth]{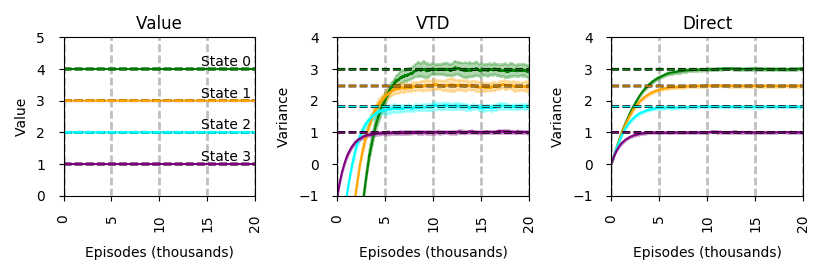}}
    	\caption{\textbf{Chain MDP} ($\lambdatwo=0.9$). Value estimate held fixed at the true values ($\alpha=0,\bar{\alpha}=0.001$). Notice the increased variance in the estimates for VTD, particularly in State 0.}
        	\label{fig:val_init_true_alpha_0}
	\end{center}
\end{figure}
\begin{figure}[!ht]
	\vskip -0.5em
	\begin{center}
		\centerline{\includegraphics[height=2.0in]{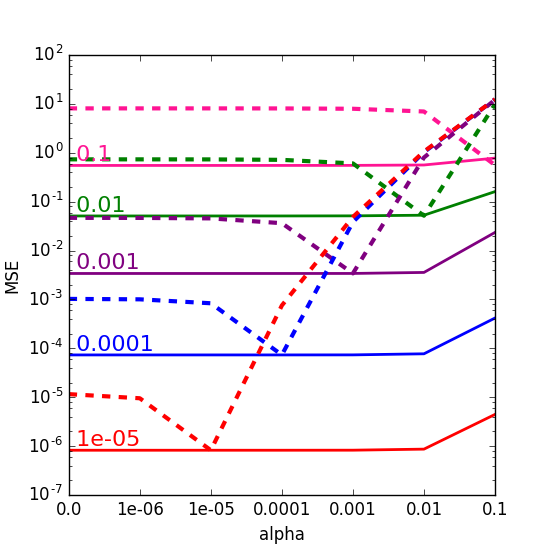}}
		\caption{\textbf{Chain MDP} ($\lambdatwo=0.9$). The MSE summed over all states as a function of ratios between the value step-size $\alpha$ (shown along the x-axis) and the variance step-size $\bar{\alpha}$ (shown as the 5 series). The \direct{} algorithm is indicated by the solid lines and VTD is indicated by the dashed. The MSE of the VTD algorithm is higher than the \direct{} algorithm, except when the step-size is the same for all estimators, $\alpha=\bar{\alpha}$ or for very small $\bar{\alpha}$.}
		\label{fig:alphas}
   \end{center}
   \vskip -2em
\end{figure}

These results beg the question, would there ever be a situation where different step-sizes between value and variance estimators is justified? Methods which automatically set the step-sizes may produce different values which are specific to the performance of each estimator. One such algorithm is ADADELTA, which adapts the step-size based on the TD error of the estimator \citep{Zeiler2012}. Figure~\ref{fig:adadelta} shows that using a separate ADADELTA step-size calculation for each estimator results in higher variance for VTD as expected (ADADELTA: $\rho=0.99, \epsilon=1e-6$), given that the value estimator and VTD produce different TD errors.
\begin{figure}[!ht]
\centering \includegraphics[width=\linewidth]{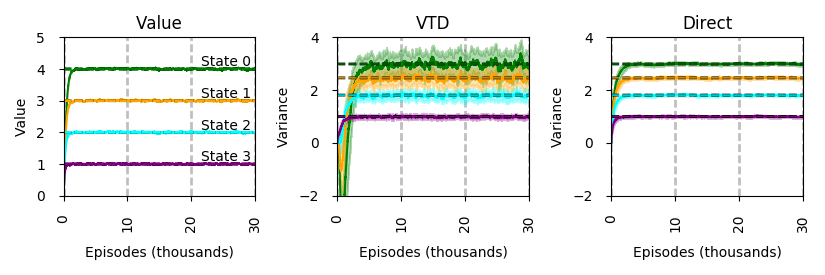}
\caption{\textbf{Chain MDP} ($\lambdatwo=0.9$). Results using ADADELTA algorithm to automatically and independently set the step-sizes $\alpha$ and $\bar{\alpha}$. The step-sizes produced are given in Appendix~\ref{sec:adadelta_step_size}.}
\label{fig:adadelta}
\end{figure}

\FloatBarrier
\subsection{Estimating for State-dependent \texorpdfstring{$\gamma$ and $\lambdatwo$}{Gamma and Lambda}.}
\label{sec:exp:state-dependent}

One of the contributions of VTD was the generalization to support state-based $\gamma$ and $\lambdatwo$. Here we evaluate the complex MDP from Figure~\ref{fig:complex_mdp} (in the on-policy setting, using $\mu$), which was designed for this scenario and which has a stochastic policy, is continuing, and has multiple possible actions from each state. Figure~\ref{fig:complex_mdp_onpolicy} shows that both algorithms estimate \vartext{} with similar results. This experiment was run with all step-sizes equal ($\alpha=\bar{\alpha}=0.01$).

\begin{figure}[!ht]
\centering \includegraphics[width=\linewidth]{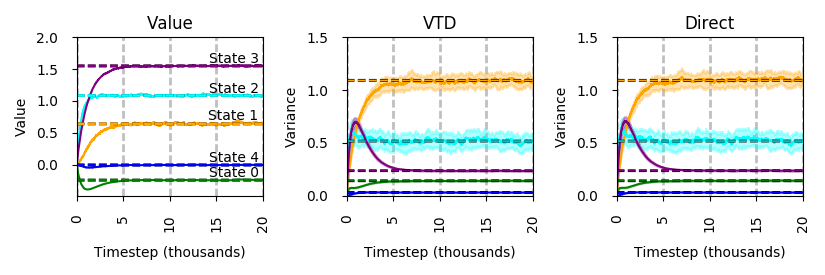}
\caption{\textbf{Complex MDP} evaluated on-policy with all step-sizes equal ($\alpha=\bar{\alpha}=0.01$).}
\label{fig:complex_mdp_onpolicy}
\end{figure}

\FloatBarrier
\subsection{Variable Error in the Value Estimates}
\label{sec:exp:err}
The derivation of our \direct{} algorithm assumes access to the true value function. The experiments of the previous sections demonstrate that both methods are robust under this assumption, in the sense that the value function was estimated from data and used to estimate \vartext{}. It remains unclear, however, how well these methods perform when the value estimates converge to biased solutions. 

To examine this we again use the complex MDP shown by Figure~\ref{fig:complex_mdp}. True values for the value functions and variance estimates are calculated from Monte Carlo simulation of 10,000,000 timesteps. For each run of the experiment each state of the value estimator was initialized to the true value plus an error ($\valhat(s)_0=\val(s)+\epsilon(s)$) drawn from a uniform distribution: $\epsilon(s)\in[-\zeta,\zeta]$, where $ \zeta=\max_s(|v(s)|)*\text{err ratio}$ 
(the maximum value in this domain is 1.55082409). The value estimate was held constant throughout the run $(\alpha=0.0)$. The experiment consisted of 120 runs of 80,000 timesteps. To look at the steady-state response of the algorithms we use only the last 10,000 timesteps in our calculations. Figure~\ref{fig:complex_error_rand_std_dev} plots the average variance estimate for each state. Additionally we show the average standard deviation of the estimates in the shaded regions. Sweeps over step-size were conducted, $\bar{\alpha}\in[0.05, 0.04, 0.03, 0.02, 0.01, 0.007, 0.005, 0.003, 0.001]$, and the MSE evaluated for each state. Each data point is for the step-size with the lowest MSE for that error ratio and state. While the average estimate is closer to the true values for VTD, the variance of the estimates is much larger. Further, the average estimates for VTD are either unchanged or move negative, while those of the \direct{} algorithm tend toward positive bias.
\begin{figure}[!ht]
\centering \includegraphics[height=2in]{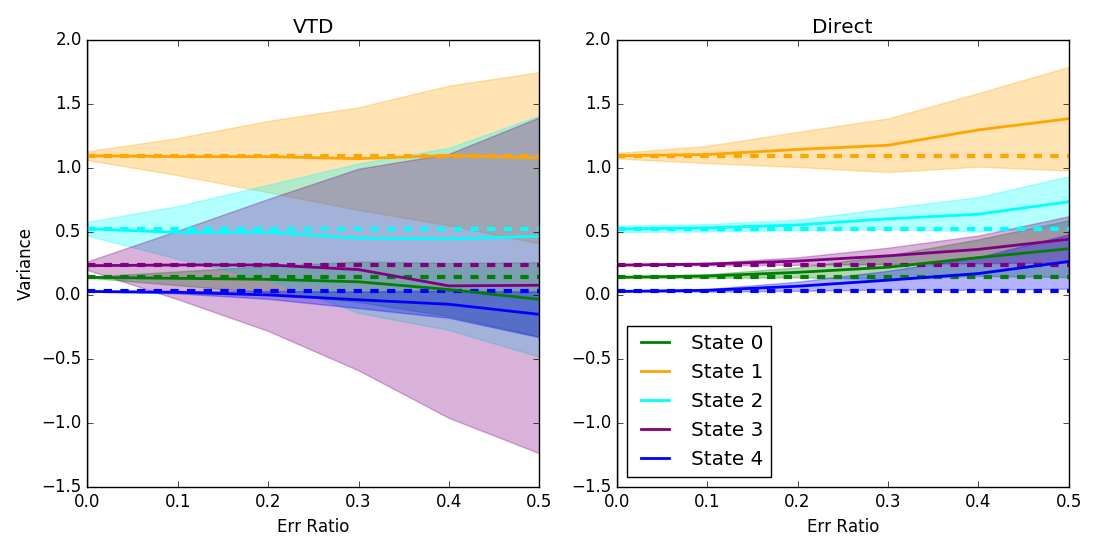}
\caption{\textbf{Complex MDP}. For each run the value estimate of each state is offset by a random amount from uniform distribution whose size is a function of the Err Ratio and the maximum true value in the MDP. Standard deviation of the estimates is shown by shading.}
\label{fig:complex_error_rand_std_dev}
\end{figure}

For Figure~\ref{fig:best_alpha} the MSE is summed over all states. Again, for each error ratio the MSE was compared over the same step-sizes as before and for each point the smallest MSE is plotted. 
\begin{figure}[!ht]
	\begin{center}    
	\centerline{\includegraphics[height=2in]{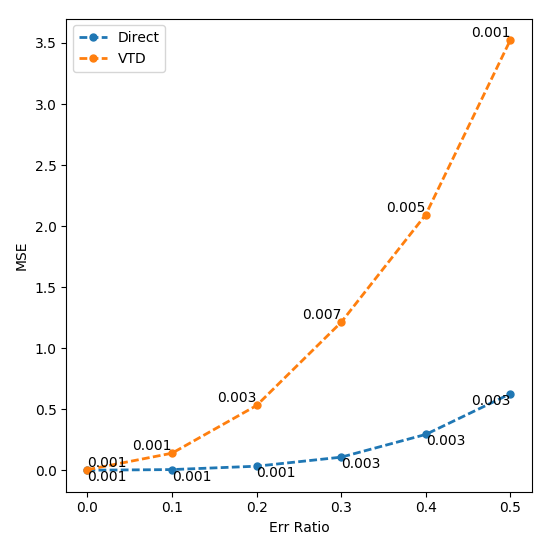}}
	\caption{\textbf{Complex MDP}. The MSE computed for the last 10,000 timesteps of 120 runs summed over all states for the step-size with the lowest overall MSE at each error ratio. For each point the step-size used ($\alpha=\bar{\alpha}$) is displayed.}
	\label{fig:best_alpha}
	\end{center}
\end{figure}
These results suggest the \direct{} algorithm is less affected by error in $\valhat$.

\FloatBarrier
\subsection{Experiments with Traces}
\label{sec:exp:traces}

In this section we briefly look at the behavior of the complex domain when traces are used. For Figure~\ref{fig:complex_traces_l0_lb_1} traces are used for the variance estimators, but not for policy evaluation ($\lambdaone=0.0,\lambdathree=1.0$) and the step-sizes are all equal (0.01). 
Here we see no significant difference between VTD and the \direct{} algorithm. 
For Figure~\ref{fig:complex_traces_l1_lb_0} we look at the opposite scenario, where traces are used for policy evaluation, but not in the variance estimators ($\lambdaone=1.0, \lambdathree=0.0$). Here we do see a difference, particularly the VTD method shows more variance in its estimates for State 0 and 3.

\begin{figure}[!ht]
	\centering
    \subfigure[]{
		\includegraphics[width=\columnwidth] {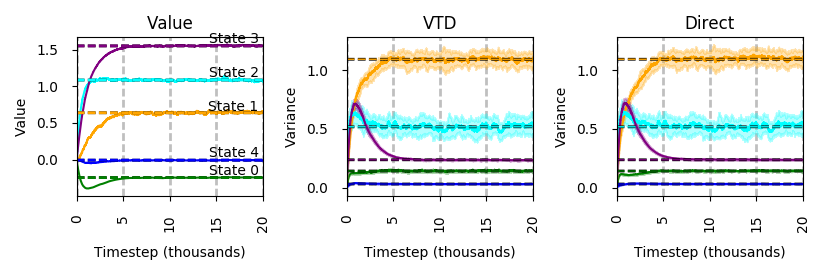}
		\label{fig:complex_traces_l0_lb_1}
	}
    \subfigure[]{
		\includegraphics[width=\columnwidth]{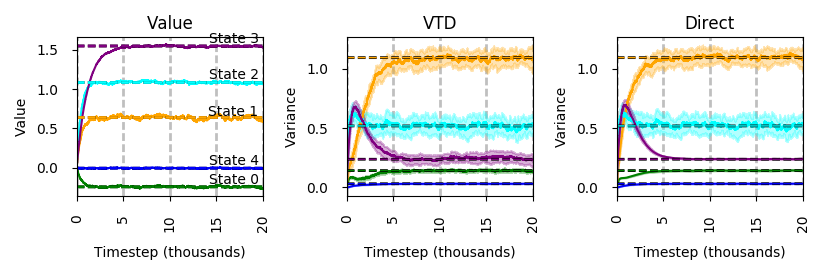}
		\label{fig:complex_traces_l1_lb_0}
    }
    \caption{\textbf{Complex MDP}. Using traces (TD($\lambda$), $\alpha=\bar{\alpha}=0.01$). \textbf{a)} Traces used in variance estimators ($\lambdaone=0.0,\lambdathree=1.0$), \textbf{b)} Traces used in value estimator ($\lambdaone=1.0,\lambdathree=0.0$). Notice the slight increase in the variance of the VTD estimates for State 0 and 3.}
\end{figure}
%
%

\FloatBarrier
\subsection{Experiments in an Off-policy Setting}
\label{sec:exp:off-policy}

In the off-policy setting the agent follows a behavior policy $\mu$, but is estimating the value of a target policy $\pi$. The ratio between these two policies is called the importance sampling ratio, $\rho=\frac{\pi(s,a)}{\mu(s,a)}$, and is used to modify the value function update. 

We evaluate two different off-policy scenarios on the complex MDP. In the first scenario we estimate \vartext{} under the target policy from off-policy samples. That is, we estimate \vartext{} that would be observed if we were following the target policy. In this scenario $\secondrho=1,\bar{\rho}=\rho$. Figure~\ref{fig:complex_offpolicy_same} shows that both methods are able achieve the same results in this setting.

\begin{figure}[t]
\begin{center}
	\centerline{\includegraphics[width=\columnwidth]{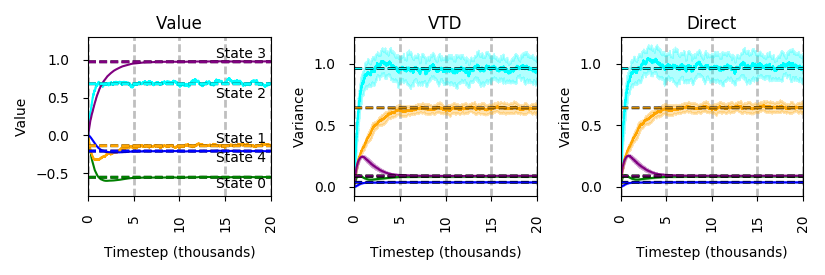}}
  \caption{\textbf{Complex MDP} estimating \vartext{} from off-policy samples ($\alpha=\bar{\alpha}=0.01, \secondrho=1,\bar{\rho}=\rho$).}
  \label{fig:complex_offpolicy_same}
\end{center}
\end{figure}

In the second off-policy setting we estimate the variance of the off-policy return, which is the return being used to update the value estimator and is simply the multiplication of the $\lambda$-return by $\rho$. In this scenario $\bar{\rho}=1$ and $\secondrho=\rho$. Figure~\ref{fig:complex_offpolicy_2_same} shows that both algorithms successfully estimate the return in this setting. However, despite having the same step-size as the value estimator, VTD produces higher variance in its estimates, as is most clearly seen in State 3.
\begin{figure}[t]
	\begin{center}
	\centerline{\includegraphics[width=\linewidth]{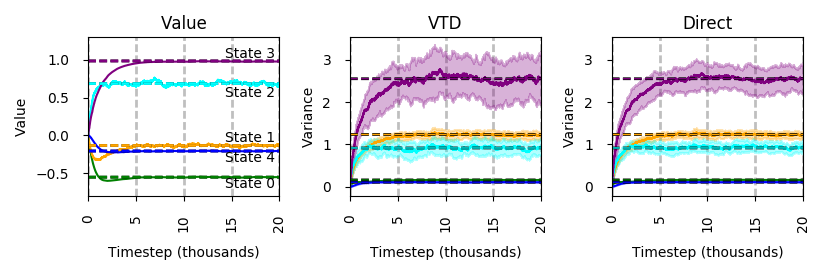}}
\caption{\textbf{Complex MDP} estimating the variance of the off-policy return ($\alpha=\bar{\alpha}=0.01,\bar{\rho}=1,\secondrho=\rho$).}
\label{fig:complex_offpolicy_2_same}
	\end{center}
\end{figure}

\section{Discussion}

Both the \direct{} method and VTD effectively estimate the variance across a range of settings, but the \direct{} method is simpler and more robust. This simplicity alone makes the \direct{} method preferable. The higher variance in estimates produced by VTD is likely due to the inherently larger target which VTD uses in its learning updates: $\expect [X^2] \geq \expect [(X - \expect [X])^2]$; we show more explicitly how this affects the updates of VTD in Appendix~\ref{sec:updates}. One would expect the differences between the two approaches to be most pronounced for domains with larger returns than those demonstrated here. Our focus was simple MDPs. In such settings we can define clear experiments where the properties of these variance estimation algorithms can be carefully evaluated isolated from additional effects like state-aliasing due to function approximation. Consider the task of helicopter hovering formalized as a reinforcement learning task \citep{Ng2004}. In the most well-known variants of this problem the agent receives massive negative reward for crashing the helicopter (e.g., minus one million). In such problems the magnitude and variance of the return is large. In such cases, estimating the second moment may not be feasible from a statistical point of view, whereas the target of our direct variance estimate should be better behaved. 

We focused on the tabular case, where each state is represented uniquely. Future work will investigate extending our theoretical characterization and experiments to the function approximation case. Our algorithm extends naturally with little modification. To extend the theory, there have been some promising results characterizing fixed points under the projected Bellman operator for the second moment \citep{Tamar2016}. An extension to projected Bellman operators could also further help bound errors incurred from inaccuracies in the value function. 


\section{Conclusion}
In this paper we introduced a simple method for estimating the variance of the $\lambda$-return using temporal difference learning.
Our approach is simpler than existing approaches, and appears to work better in practice. We performed an extensive empirical study. Our findings suggest that our new method outperforms VTD when: (1) there is a mismatch in step-size between the value estimator and the variance estimator, (2) traces are used with the value estimator, (3) estimating variances of the off-policy return, and (4) there is error in the value estimate.


\section*{Acknowledgements}
Funding for this work was provided by the Natural Sciences and Engineering Research Council of Canada, Alberta Innovates, and Google DeepMind.

\clearpage
\subsubsection*{References}
\printbibliography

\clearpage

\appendix

\section{Variance Estimation in the Off-Policy Setting}
\label{appendix:general}

Value estimates are made with respect to a target policy, $\pi$. If the behavior policy, $\mu$, is the same as the target policy then we say that samples are collected on-policy and when they are not the same, the samples are collected off-policy. A common approach for off-policy learning algorithms is to weight each update by the importance sampling ratio: $\rho_t=\frac{\pi(S_t,A_t)}{\mu(S_t,A_t)}$. Off-policy estimates are then implemented by multiplying the trace updates by $\rho_t$:
\begin{align*}
\trace_t(s)&\leftarrow\begin{cases}
      \rho_t(\gamma_t\lambda_t \trace_{t-1}(s) + 1) & s=S_t\\
      \rho_t\gamma_t\lambda_t \trace_{t-1}(s) & \forall s \in \states, s \ne S_t
   \end{cases}.
\end{align*}

There are two different scenarios to be considered in the off-policy setting. The first scenario is estimating the variance of the (on-policy) $\lambda$-return of the target policy, while following a different behavior policy. 
%
In the second setting, the goal is to estimate the variance of the off-policy $\lambda$-return.  
The off-policy $\lambda$-return is
\begin{equation*}
\!\!G_t^{\lambda}
\!=\!\rho_{t} \Big(
\C_{t+1}+\gamma_{t+1}(1\!-\!\lambda_{t+1})\val_t(S_{t+1}) +\gamma_{t+1}\lambda_{t+1} G^{\lambda}_{t+1} \Big).
\end{equation*}
where the multiplication by the potentially large importance sampling ratios can significantly increase variance. 


It is important to note that you would only ever estimate one or the other off-policy variance with a given estimator. Let $\secondrho$ be the weighting for the value estimator, and $\bar{\rho}$ the weighting for the variance estimator. If estimating the variance of the target return from off-policy samples, the first scenario, $\secondrho_t=1\ \forall t$ and $\bar{\rho}_t=\rho_t$. If estimating the variance of the off-policy return $\bar{\rho}_t=1\ \forall t$ and $\secondrho_t=\rho_t$. 

Here we present the resulting algorithms which use TD($\lambda$) estimators with accumulating traces.

\textbf{Direct Variance Algorithm}
\begin{equation}
\begin{aligned}
\bar{\C}_{t+1}& \leftarrow (\secondrho_t\delta_t + (\secondrho_t-1)\valhat_{t+1}(s))^2\\
\bar{\gamma}_{t+1}& \leftarrow \gamma_{t+1}^2 \lambdatwo_{t+1}^2\secondrho_t^2\\
\bar{\delta}_t& \leftarrow \bar{\C}_{t+1} + \bar{\gamma}_{t+1}\varghat_t(s') - \varghat_t(s)\\
\bar{\trace}_t(s)&\leftarrow\begin{cases}
     \bar{\rho}_t(\bar{\gamma}_t\lambdathree_t \bar{\trace}_{t-1}(s) + 1) & s=S_t \\
      \bar{\rho}_t(\bar{\gamma}_t\lambdathree_t \bar{\trace}_{t-1}(s)) & \forall s \in \states, s \ne S_t
   \end{cases}\\
\varghat_{t+1}(s)&\leftarrow \varghat_t(s)+\bar{\alpha} \bar{\delta}_t \bar{\trace}_t(s)
\end{aligned}
\label{alg:direct_off_policy}
\end{equation}

Variance is computed directly as $\varghat_{t+1}(s)$.

\textbf{Second Moment Algorithm}
\begin{equation}
\begin{aligned}
\bar{G}_t&\leftarrow \C_{t+1} + \gamma_{t+1}(1-\lambdatwo_{t+1})\valhat_{t+1}(s')\\
\bar{\C}_{t+1}& \leftarrow \secondrho_t^2 \bar{G}_t^2 + 2\secondrho_t^2 \gamma_{t+1} \lambdatwo_{t+1} \bar{G}_t \valhat_{t+1}(s')\\
\bar{\gamma}_{t+1}& \leftarrow \secondrho_t^2 \gamma_{t+1}^2 \lambdatwo_{t+1}^2\\
\bar{\delta}_t& \leftarrow \bar{\C}_{t+1} + \bar{\gamma}_{t+1}\second_t(s') - \second_t(s)\\
\bar{\trace}_t(s)&\leftarrow\begin{cases}
     \bar{\rho}_t(\bar{\gamma}_t\lambdathree_t \bar{\trace}_{t-1}(s) + 1) & s=S_t \\
     \bar{\rho}_t(\bar{\gamma}_t\lambdathree_t \bar{\trace}_{t-1}(s)) & \forall s \in \states, s \ne S_t
   \end{cases}\\
\second_{t+1}(s)&\leftarrow \second_t(s)+\bar{\alpha} \bar{\delta}_t\bar{\trace}_t(s)
\end{aligned}
\label{alg:comparison_off_policy}
\end{equation}

Variance is computed as $\varghat_{t+1}(s)=M_{t+1}(s)-\valhat_{t+1}(s)^2$. 

For convenience we summarize the variables used:
\begin{align*}
\valhat-&\text{estimated value function of the target policy $\pi$.}\\
\C-&\text{reward used in the value function estimate.}\\
\bar{\C}-&\text{\metareward{} used in the variance estimate.}\\
\lambdatwo -&\text{bias-variance parameter of the target $\lambda$-return.}\\
\lambdaone -&\text{trace-decay parameter of the value estimator.}\\
\lambdathree -&\text{trace-decay parameter of the secondary estimator.}\\
\gamma -&\text{discounting function used by the \valtext{} estimator.}\\
\bar{\gamma} -&\text{discounting function used by the \vartext{} estimator.}\\
\delta_t -&\text{TD error of the value function at time $t$.}\\
\bar{\delta_t} -&\text{TD error of the variance estimator at time $t$.}\\
\second-&\text{estimate of the second moment.}\\
\varghat-&\text{estimate of the variance.}\\
\bar{\rho}-&\text{importance sampling ratio for estimating the }\\
&\text{variance of the target return from off-policy samples.}\\
\secondrho-&\text{importance sampling ratio used to estimate the} \\
&\text{variance of the off-policy return.}
\end{align*}

\section{Bellman Operators for the Variance in the Off-Policy Setting}

\begin{lemma}
\label{delta_var}
For $\val(s) = \Exp{ G_{t+1}^{\lambda}|S_t=s}$, i.e., satisfying the Bellman equation, for any bounded
function $b: \states \times \actions \times \mathbb{R} \times \states \rightarrow \mathbb{R}$, 
\begin{equation*}
\expect[b(S_t, A_t, R_{t+1}, S_{t+1})(G_{t+1}^\lambda-\val(S_{t+1}))|S_t=s]=0\ \end{equation*}
\end{lemma}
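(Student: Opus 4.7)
The plan is to reduce the statement to a direct application of the Markov property together with the tower law of conditional expectation. Note that $G_{t+1}^{\lambda}$ is a function of $R_{t+2}, S_{t+2}, A_{t+1}, R_{t+3}, S_{t+3}, \ldots$, so it is determined entirely by the trajectory from time $t+1$ onward, while $b(S_t, A_t, R_{t+1}, S_{t+1})$ is a function only of quantities up through time $t+1$. The natural $\sigma$-algebra to condition on is the one generated by $(S_t, A_t, R_{t+1}, S_{t+1})$.

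First I would apply the tower property, writing
\begin{align*}
\expect[b(S_t, A_t, R_{t+1}, S_{t+1})(G_{t+1}^\lambda - \val(S_{t+1})) \mid S_t = s]
= \expect\bigl[\,\expect[\,b(\cdot)(G_{t+1}^\lambda - \val(S_{t+1})) \mid S_t, A_t, R_{t+1}, S_{t+1}\,] \bigm| S_t = s \bigr].
\end{align*}
Inside the inner conditional expectation, $b(S_t, A_t, R_{t+1}, S_{t+1})$ and $\val(S_{t+1})$ are measurable, so they factor out, leaving
\begin{align*}
\expect[b(S_t, A_t, R_{t+1}, S_{t+1})(G_{t+1}^\lambda - \val(S_{t+1})) \mid S_t = s]
= \expect\bigl[\, b(\cdot)\bigl( \expect[G_{t+1}^\lambda \mid S_t, A_t, R_{t+1}, S_{t+1}] - \val(S_{t+1})\bigr) \bigm| S_t = s \bigr].
\end{align*}

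Next I would invoke the Markov property of the MDP together with the fact that the target policy $\pi$ depends only on state: conditional on $S_{t+1}$, the future trajectory (and hence $G_{t+1}^\lambda$) is independent of $(S_t, A_t, R_{t+1})$. Therefore
\begin{align*}
\expect[G_{t+1}^\lambda \mid S_t, A_t, R_{t+1}, S_{t+1}] = \expect[G_{t+1}^\lambda \mid S_{t+1}] = \val(S_{t+1}),
\end{align*}
where the last equality is exactly the Bellman-consistency hypothesis on $\val$. The inner bracket is then identically zero, and since $b$ is bounded we can conclude that the whole outer expectation vanishes as required.

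The argument is short and I do not expect a real obstacle; the only care needed is to justify the factorization in the second step (boundedness of $b$ ensures integrability so the conditional expectation splits legitimately), and to be explicit that the policy is state-dependent so that the Markov property applied to $S_{t+1}$ really does decouple the pre-$S_{t+1}$ and post-$S_{t+1}$ parts of the trajectory. If one is uncomfortable viewing $G_{t+1}^\lambda$ as a single random variable (since the $\lambda$-return mixes bootstrapped value estimates with sampled returns), I would fall back on the telescoping definition $G_{t+1}^\lambda = R_{t+2} + \gamma_{t+2}(1-\lambda_{t+2})\val(S_{t+2}) + \gamma_{t+2}\lambda_{t+2} G_{t+2}^\lambda$ and argue inductively over the horizon that its conditional expectation given $S_{t+1}$ equals $\val(S_{t+1})$, which is precisely the Bellman equation iterated.
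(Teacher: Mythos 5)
Your proposal is correct and follows essentially the same route as the paper's own proof: apply the tower property conditioning on $(S_t, A_t, R_{t+1}, S_{t+1})$, pull out the measurable factor $b$, and observe that the remaining conditional expectation of $G_{t+1}^\lambda - \val(S_{t+1})$ vanishes by the Bellman hypothesis. Your write-up is in fact slightly more careful than the paper's, since you explicitly invoke the Markov property to justify replacing the conditioning on the full tuple by conditioning on $S_{t+1}$ alone, a step the paper leaves implicit.
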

\begin{proof}
Let $b_t = b(S_t, A_t, R_{t+1}, S_{t+1})$.
By the law of total expectation:
\begin{align*}
  \Exp{ b_t (G_{t+1}^{\lambda}- \val(S_{t+1})) |S_t=s}
  = \expect\left[
    \expect[b_{t} (G_{t+1}^{\lambda} \!- \val(S_{t+1})) |S_t, A_{t}, S_{t+1}]|S_t=s\right]
\end{align*}

Given $S_{t}$, $A_t$, $R_{t+1}$ and $S_{t+1}$,
$b_{t}$ is constant and can be moved outside of the expectation. Therefore,
\begin{align*}
\expect&[b_{t} (G_{t+1}^{\lambda} - \val(S_{t+1})) \Big|S_{t}, A_{t}, R_{t+1}, S_{t+1}]
=\Exp{ b_{t} \big|S_{t}, A_{t}, R_{t+1}, S_{t+1} }
\times \Exp{ G_{t+1}^{\lambda} - \val(S_{t+1}) \big|S_{t}, A_{t}, R_{t+1}, S_{t+1}}
\end{align*}
Because
\begin{equation*}
\Exp{ G_{t+1}^{\lambda} - \val(S_{t+1}) \big|S_{t}, A_{t}, R_{t+1}, S_{t+1}}=0
\end{equation*}
the result follows.
\end{proof} 


\begin{theorem}
\label{recurse_var_off_policy}
\begin{align*}
\varg(s)
=\expect[(\secondrho_t\delta_t +(\secondrho_t-1)\val(s))^2+\lambda_{t+1}^2\gamma_{t+1}^2\secondrho_t^2\varg(S_{t+1})|S_t=s]
\end{align*}
\end{theorem}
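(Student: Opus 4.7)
The plan is to mirror the three-step structure used for Theorem \ref{recurse_var}: (i) rewrite $G_t^{\lambda} - \val(S_t)$ as a ``base'' term plus a discounted recursive copy of itself, (ii) square and take the conditional expectation under the behavior policy, and (iii) apply Lemma \ref{delta_var} to kill the cross-term. The only genuinely new ingredient is that the off-policy $\lambda$-return carries the importance sampling ratio $\secondrho_t$ in front of every one-step operator, which breaks the clean cancellation that produced $\delta_t$ in the on-policy derivation and instead yields the $(\secondrho_t - 1)\val(s)$ correction that appears in the theorem.

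First I would expand, using the definition of the off-policy $\lambda$-return,
\begin{align*}
G_t^{\lambda} - \val(s)
&= \secondrho_t\bigl(\C_{t+1} + \gamma_{t+1}\val(S_{t+1})\bigr) - \val(s) + \secondrho_t \gamma_{t+1}\lambda_{t+1}\bigl(G_{t+1}^{\lambda} - \val(S_{t+1})\bigr) \\
&= \secondrho_t \delta_t + (\secondrho_t - 1)\val(s) + \secondrho_t \gamma_{t+1}\lambda_{t+1}\bigl(G_{t+1}^{\lambda} - \val(S_{t+1})\bigr),
\end{align*}
where the first line collects the $(1-\lambda_{t+1})\val(S_{t+1})$ piece into the bracket and the second line uses $\C_{t+1} + \gamma_{t+1}\val(S_{t+1}) = \delta_t + \val(s)$. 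This is the off-policy analogue of Equation \eqref{eq:delta-expansion}.

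Then, writing $A = \secondrho_t\delta_t + (\secondrho_t - 1)\val(s)$ and $B = \secondrho_t \gamma_{t+1}\lambda_{t+1}(G_{t+1}^{\lambda} - \val(S_{t+1}))$, I square, condition on $S_t = s$, and expand into $\Exp{A^2|S_t=s} + \Exp{B^2|S_t=s} + 2\Exp{AB|S_t=s}$. For $\Exp{B^2|S_t=s}$ I would use the Markov property to pull the inner conditional expectation of $(G_{t+1}^{\lambda} - \val(S_{t+1}))^2$ through to recover $\varg(S_{t+1})$, yielding the $\lambda_{t+1}^2\gamma_{t+1}^2\secondrho_t^2 \varg(S_{t+1})$ term of the theorem statement. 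For the cross-term, the prefactor $2 A\,\secondrho_t\gamma_{t+1}\lambda_{t+1}$ is a bounded function of $(S_t, A_t, R_{t+1}, S_{t+1})$ only, so Lemma \ref{delta_var} applies verbatim and $\Exp{AB|S_t=s}=0$.

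The main obstacle is not computational but a matter of checking that Lemma \ref{delta_var} is actually in force in the off-policy setting: the lemma requires $\val(S_{t+1}) = \Exp{G_{t+2}^{\lambda}|S_{t+1}}$, where $G^{\lambda}$ now carries importance-sampling weights, and it requires the expectation that we invoke to be under the behavior distribution that generates $(S_t,A_t,R_{t+1},S_{t+1})$. Both hold because importance sampling makes $\Exp_\mu{G_{t+1}^{\lambda}|S_{t+1}} = \Exp_\pi{G_{t+1}^{\lambda}|S_{t+1}} = \val(S_{t+1})$, and because the prefactor $A\,\secondrho_t\gamma_{t+1}\lambda_{t+1}$ is a measurable function of exactly the tuple the lemma allows. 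Once these two checks are made explicit, combining the three pieces gives
\begin{align*}
\varg(s) = \Exp{\bigl(\secondrho_t\delta_t + (\secondrho_t - 1)\val(s)\bigr)^2 + \lambda_{t+1}^2\gamma_{t+1}^2\secondrho_t^2\varg(S_{t+1}) \,|\, S_t = s},
\end{align*}
which is the claim.
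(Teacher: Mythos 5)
Your proposal is correct and follows essentially the same route as the paper's proof: the identical decomposition $G_t^{\lambda} - \val(s) = (\secondrho_t\delta_t + (\secondrho_t-1)\val(s)) + \secondrho_t\gamma_{t+1}\lambda_{t+1}(G_{t+1}^{\lambda} - \val(S_{t+1}))$, squaring, invoking Lemma \ref{delta_var} on the cross-term, and the law of total expectation to recover $\varg(S_{t+1})$. The only cosmetic difference is that you apply the lemma once with the combined prefactor $2(\secondrho_t\delta_t + (\secondrho_t-1)\val(s))\secondrho_t\gamma_{t+1}\lambda_{t+1}$ while the paper splits the cross-term and applies it twice; your explicit check that the importance-weighted return still satisfies $\expect[G_{t+1}^{\lambda}\,|\,S_{t+1}] = \val(S_{t+1})$ under the behavior distribution is a point the paper leaves implicit.
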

\begin{proof}
The proof is similar to the proof of Theorem \ref{recurse_var}.
%
\begin{align*}
\varg(s)
=&\expect[\{G_t^\lambda - \val(S_t)\}^2|S_t=s]\\
=& \expect[\{\secondrho_t\C_{t+1}+\secondrho_t\gamma_{t+1}(1-\lambda_{t+1})\val(S_{t+1}) 
+ \secondrho_t\gamma_{t+1}\lambda_{t+1} G_{t+1}^\lambda - v(s)\}^2|S_t=s]\\
=& \expect[\{\secondrho_t\C_{t+1}+\secondrho_t\gamma_{t+1}\val(S_{t+1})-\secondrho_t \val(s) + \secondrho_t \val(s) 
- \secondrho_t\gamma_{t+1}\lambda_{t+1} \val(S_{t+1})\\
\ &+ \secondrho_t\gamma_{t+1}\lambda_{t+1} G_{t+1}^\lambda-\val(s)\}^2|S_t=s]\\
=& \expect[\{(\secondrho_t\delta_t\ + (\secondrho_t - 1)\val(s)) + \secondrho_t\gamma_{t+1}\lambda_{t+1}(G_{t+1}^\lambda-\val(S_{t+1}))\}^2|S_t=s]\\
=& \expect[(\secondrho_t\delta_t + (\secondrho_t -1)\val(s))^2
+\secondrho_t^2\gamma_{t+1}^2\lambda_{t+1}^2(G_{t+1}^\lambda- \val(S_{t+1}))^2\\
\ &+2\secondrho_t \gamma_{t+1}\lambda_{t+1} (\secondrho_t\delta_t + (\secondrho_t - 1)\val(s))(G_{t+1}^\lambda- \val(S_{t+1}))|S_t=s]\\
=& \expect[(\secondrho_t\delta_t + (\secondrho_t - 1)\val(s))^2
+\secondrho_t^2\gamma_{t+1}^2\lambda_{t+1}^2(G_{t+1}^\lambda- \val(S_{t+1}))^2\\
\ &+2\secondrho_t^2 \gamma_{t+1}\lambda_{t+1} \delta_t (G_{t+1}^\lambda- \val(S_{t+1}))
+ 2\secondrho_t \gamma_{t+1}\lambda_{t+1}(\secondrho_t-1)\val(s)(G_{t+1}^\lambda-\val(S_{t+1}))|S_t=s]
\end{align*}
Using Lemma \ref{delta_var}, with different fixed functions $b$, we can conclude that the last two terms are zero, giving
\begin{align*}
\varg(s) =&\expect[(\secondrho_t\delta_t + (\secondrho_t-1)\val(s))^2
+\secondrho_t^2\gamma_{t+1}^2\lambda_{t+1}^2(G_{t+1}^\lambda-\val(S_{t+1}))^2|S_t=s]\\
\intertext{By the law of total expectation} 
\varg(s)
=&\expect[(\secondrho_t\delta_t + (\secondrho_t-1)\val(s))^2
+\expect[\secondrho_t^2\gamma_{t+1}^2\lambda_{t+1}^2(G_{t+1}^\lambda-\val(s'))^2|S_{t+1}=s']|S_t=s]\\
=&\expect[(\secondrho_t\delta_t+(\secondrho_t-1)\val(s))^2
+ \secondrho_t^2\gamma_{t+1}^2\lambda_{t+1}^2\varg(S_{t+1})|S_t=s].
\end{align*}
completing the proof.
\end{proof}

Theorem \ref{recurse_var_off_policy} gives a Bellman equation for $\varghat(s)$ in the more general off-policy setting. 
The resulting TD algorithm uses \metareward{} $(\secondrho_t\delta_t +(\secondrho_t-1)\val(s))^2$ and discounting function $\secondrho_t^2\gamma_{t+1}^2\lambda^2$.

\section{ADADELTA Step-Sizes}
\label{sec:adadelta_step_size}

The step-sizes generated by the ADADELTA algorithm in Figure~\ref{fig:adadelta} are shown in Figure~\ref{fig:adadelta_stepsizes}. As we evaluate in the tabular case at each timestep only the step-size for the current state has any impact. Thus, the values shown here are the average step-size used over each episode.

\begin{figure}[!ht]
\centering \includegraphics[height=2in]{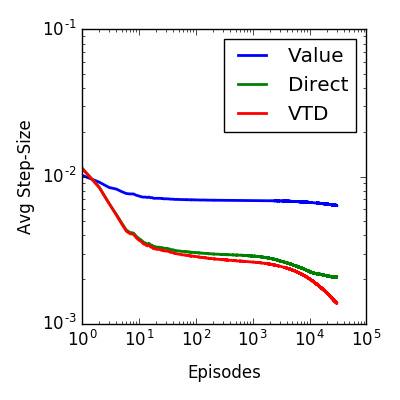}
\caption{\textbf{Chain MDP}. The average step-sizes computed for ADADELTA in Figure~\ref{fig:adadelta}.}
\label{fig:adadelta_stepsizes}
\end{figure}


\section{Variability in Updates}
\label{sec:updates}
In this section, we show the effective update to $\varghat_t(s)$ on each timestep for each of the two algorithms in the on-policy setting. For notational clarity let $\c=\c_{t+1},\alpha=\alpha_t, \gamma=\gamma_{t+1}, \lambdatwo=\lambdatwo_{t+1},s=s_{t},s'=s_{t+1},\delta_t=\delta$.

For the direct algorithm the change is just:

\begin{equation}
\begin{aligned}
\Delta \varghat_t(s)=\bar{\alpha}(\delta^2 + \bar{\gamma}\varghat_t(s')-\varghat_t(s)).
\label{eq:direct_update}
\end{aligned}
\end{equation}

The updates for the VTD algorithm are much more complicated to compute and we will make some assumptions about the domain in order to simplify the derivation. First we compute the change in the second moment and value estimators separately.

We first expand the term $\delta^2$:
\begin{align*}
\delta=&\c+\gamma \valhat_t(s')-\valhat_t(s)\\
\delta^2=&(\c+\gamma \valhat_t(s'))^2 
- 2(\c+\gamma \valhat_t(s'))\valhat_t(s)+ \valhat_t(s)^2.
\end{align*}

Now we expand the change in the second moment estimate, $M$. To simplify the expansion we make the assumption that at each transition the agent moves to a new state, i.e. $s_t\ne s_{t+1}\forall t$ (this is not required for our algorithm, but simplifies the expansions below). This assumption holds for both of the domains examined in this paper. This allows us to substitute $\valhat_{t+1}(s')=\valhat_{t}(s)$, which greatly simplifies the updates.

\begin{align*}
\Delta M(s)=&\bar{\alpha}[(\c+\gamma \valhat_{t+1}(s'))^2
- \bar{\gamma}^2 \valhat_{t+1}(s')^2 + \bar{\gamma}M_t(s')-M_t(s)]\\
=&\bar{\alpha}[(\c+\gamma \valhat_{t}(s'))^2 
- \bar{\gamma}^2 \valhat_{t}(s')^2 + \bar{\gamma}M_t(s')-M_t(s)]\\
=&\bar{\alpha}[(\c+\gamma \valhat_{t}(s'))^2 -2(\c+\gamma \valhat_t(s'))\valhat_t(s)
+ \valhat_t(s)^2 + 2(\c+\gamma \valhat_t(s'))\valhat_t(s) - \valhat_t(s)^2 \\
&- \bar{\gamma}^2 \valhat_{t}(s')^2 + \bar{\gamma}M_t(s')-M_t(s)]\\
=&\bar{\alpha}[\delta^2 + 2(\c+\gamma \valhat_t(s'))\valhat_t(s) - \valhat_t(s)^2 
- \bar{\gamma}^2 \valhat_{t}(s')^2 + \bar{\gamma}M_t(s')-M_t(s)]\\
\intertext{Notice that from the definition of the TD error: $R+\gamma \valhat_t(s')=\delta+\valhat_t(s)$.}
=&\bar{\alpha}[\delta^2 + 2(\delta+\valhat_t(s))\valhat_t(s) - \valhat_t(s)^2 
- \bar{\gamma}^2 \valhat_{t}(s')^2 + \bar{\gamma}M_t(s')-M_t(s)]\\
=&\bar{\alpha}[\delta^2 + 2\delta \valhat_t(s) + \valhat_t(s)^2 - \bar{\gamma}^2 \valhat_{t}(s')^2 
+ \bar{\gamma}M_t(s')-M_t(s)]\\
=&\bar{\alpha}[\delta^2 + (\bar{\gamma}M_t(s')-\bar{\gamma}\valhat_t(s')^2) 
-(M_t(s)-\valhat_{t}(s)^2) + 2\delta \valhat_t(s) - \bar{\gamma}^2 \valhat_{t}(s')^2
+\bar{\gamma}\valhat_t(s')^2]\\
=&\bar{\alpha}[\delta^2 + \bar{\gamma}\varghat_t(s') - \varghat_t(s) + 2\delta \valhat_t(s)
- \bar{\gamma}^2 \valhat_{t}(s')^2 + \bar{\gamma}\valhat_t(s')^2]\\
=&\bar{\alpha}[\delta^2 + \bar{\gamma}\varghat_t(s') - \varghat_t(s)]
+ \bar{\alpha}[2\delta \valhat_t(s) - \bar{\gamma}^2 \valhat_{t}(s')^2 + \bar{\gamma}\valhat_t(s')^2]\\
\end{align*}

The first half of this equation is the same as the update for the direct algorithm \eqref{eq:direct_update}. Now we expand the change in the variance update for VTD:

\begin{align*}
\Delta \varghat_t(s) =&(M_{t+1}(s) - \valhat_{t+1}(s)^2) - (M_{t}(s) - \valhat_{t}(s)^2)\\
=&\Delta M(s) + \valhat_{t}(s)^2 - \valhat_{t+1}(s)^2\\
=&\Delta M(s) + \valhat_{t}(s)^2 - (\alpha\delta + \valhat_{t}(s))^2\\
=&\Delta M(s) + \valhat_t(s)^2 
- ((\alpha\delta)^2 + 2\alpha\delta \valhat_t(s) + \valhat_t(s)^2)\\
=&\Delta M(s) -(\alpha\delta)^2 - 2\alpha\delta \valhat_t(s).
\end{align*}

Note that in the case that $\alpha=\bar{\alpha}$ this last term cancels out and we're left with:

\begin{align*}
\Delta \varghat_t(s)=&\alpha[\delta^2 + \bar{\gamma}\varghat_t(s') - \varghat_t(s)] +
\alpha \valhat_t(s')^2(\bar{\gamma}- \bar{\gamma}^2) -(\alpha\delta)^2.
\end{align*}

This suggests that VTD will deviate from the direct method more when: $\alpha$ is larger, $\valhat_t(s')$ is larger, $\bar{\gamma}=0.5$ and for large values of $\delta$. In general, we expect from this equation that the updates for the VTD will be larger than those of the direct method, suggesting a cause for the higher variance of variance estimates across runs as observed for VTD under a number of scenarios.

We also empirically tested this hypothesis, with Table~\ref{table:updates} showing the updates for the two algorithms across the various experiments. For episodic tasks (chain MDP, Figures~\ref{fig:chain_l1.0_same}-\ref{fig:adadelta}) the results show the average total absolute change over all states for a given episode averaged across runs and then averaged across all episodes. For the continuing case (complex MDP, Figures~\ref{fig:complex_mdp_onpolicy}-\ref{fig:complex_offpolicy_2_same}) the results are the average absolute change for a timestep averaged over all runs and then averaged across the entire run length. The experiments shaded in gray are those where the two algorithms behaved nearly identically. In this case we see that the average magnitude of updates is nearly identical. For the other experiments the VTD algorithm showed higher variance in its variance estimates across runs. For these experiments we see that the average magnitude of the VTD updates is much larger than for the direct algorithm.

\definecolor{gray}{rgb}{0.8, 0.8, 0.8}

\begin{table}[t]
\caption{Average updates for various experiments.}
\label{table:updates}
\begin{center}
\begin{tabular}{|l|c|c|c|c|}
\hline
Fig. & Value & Snd Mmnt & VTD & Direct\\
\hline
\rowcolor{gray}\ref{fig:chain_l1.0_same} & 0.00332 & 0.0157 & 0.00415 & 0.00415\\
\hline
\ref{fig:chain_l1.0_less} & 0.0322 & 0.0165 & 0.143 & 0.00387\\
\hline
\ref{fig:chain_l1.0_more} & 0.00332 & 0.156 & 0.142 & 0.0419\\
\hline
\ref{fig:val_init_true_alpha_0} & 0.0 & 0.0166 & 0.0166 & 0.00381\\
\hline
\ref{fig:adadelta} & 0.0212 & 0.0306 & 0.0752 & 0.00884\\
\hline
\rowcolor{gray}\ref{fig:complex_mdp_onpolicy} & 0.00362 & 0.00675 & 0.00381 & 0.00385\\
\hline
\rowcolor{gray}\ref{fig:complex_offpolicy_same} & 0.00362 & 0.00461 & 0.00303 & 0.00307\\
\hline
\ref{fig:complex_offpolicy_2_same} & 0.00362 & 0.0110 & 0.0116 & 0.00838\\
\hline
\end{tabular}
\end{center}
\end{table}

\end{document}